\newtheorem{corollary}{Corollary}[section]
\newtheorem{theorem}{Theorem}[section]
\newtheorem{lemma}{Lemma}[section]
\newcommand{\argmax}{\mathop{\rm argmax}}
\def\EE{\mathbb{E}}
\def\Qexpl{{(D,\pi_t(\cdot|x))_{t=1}^{T}}}
\def\Qpexpl{(D, a \sim \pi(\cdot|x))}
\newcommand{\chomp}[1]{}
\icmltitlerunning{Learning from Logged Implicit Exploration Data}
\begin{document} 

\twocolumn[
\icmltitle{Learning from Logged Implicit Exploration Data}

% It is OKAY to include author information, even for blind
% submissions: the style file will automatically remove it for you
% unless you've provided the [accepted] option to the icml2010
% package.
\icmlauthor{Alex Strehl}{astrehl@gmail.com}
\icmladdress{Facebook}
\icmlauthor{John Langford}{jl@yahoo-inc.com}
\icmladdress{Yahoo!}
\icmlauthor{Sham Kakade}{shamkakade@gmail.com}
\icmladdress{University of Pennsylvania}

% You may provide any keywords that you 
% find helpful for describing your paper; these are used to populate 
% the "keywords" metadata in the PDF but will not be shown in the document
\icmlkeywords{ranking, internet advertising, supervised learning, reinforcement learning, exploration}

\vskip 0.3in
]

%\sloppy

\begin{abstract}
We provide a sound and consistent foundation for the use of
\emph{nonrandom} exploration data in ``contextual bandit'' or
``partially labeled'' settings where only the value of a chosen action
is learned. 

The primary challenge in a variety of settings is that the exploration
policy, in which ``offline'' data is logged, is not explicitly known.
Prior solutions here require either control of the actions
during the learning process, recorded random exploration, or actions
chosen obliviously in a repeated manner.  The techniques reported here
lift these restrictions, allowing the learning of a policy for
choosing actions given features from historical data where no
randomization occurred or was logged.

We empirically verify our solution on a reasonably sized set of
real-world data obtained from an online advertising company.
\end{abstract}

\section{The Problem}
\label{prob}
Consider the advertisement display problem, where a search engine
company chooses an ad to display which is intended to interest the
user.  Revenue is typically provided to the search engine from the
advertiser only when the user clicks on the displayed ad.  This
problem is of intrinsic economic interest, resulting in a substantial
fraction of income for several well known companies such as Google,
Yahoo!, and Facebook.  Furthermore, existing trends imply this
problem is of growing importance.

Before discussing the approach we propose, it's important to formalize
and generalize the problem, and then consider why more conventional
approaches can fail.

\subsection*{The warm start problem for contextual exploration}
\label{s:bandit}

Let $\mathcal{X}$ be an arbitrary input space, and $\mathcal{A} =
\{1,\cdots,k\}$ be a set of actions. An instance of the {\it contextual bandit problem} 
is specified by a distribution $D$ over
tuples $\left(x,\vec{r}\right)$ where $x \in \mathcal{X}$ is an input
and $\vec{r} \in [0,1]^k$ is a vector of rewards
\cite{langford08b}. Events occur on a round by round basis where on
each round $t$:
\begin{enumerate}
\item The world draws $(x,\vec{r})\sim D$ and announces $x$.
\item The algorithm chooses an action $a \in\mathcal{A}$, possibly
as a function of $x$ and historical information.
\item The world announces the reward $r_a$ of action $a$.
\end{enumerate}
It is critical to understand that this is not a standard supervised
learning problem, because the reward of other actions $a' \neq a$ is
not revealed.

The standard goal in this setting is to maximize the sum of rewards
$r_a$ over the rounds of interaction.  In order to do this well,
it is essential to use previously recorded events to form a good
policy on the first round of interaction.  This is known as the ``warm
start'' problem, and is the subject of this paper.  Formally, given
a dataset of the form $S = (x,a,r_a)^*$ generated by the interaction of
an uncontrolled logging policy, we want to construct a policy $h$
maximizing (or approximately maximizing)
\[
V^h := E_{(x,\vec{r})\sim D}[ r_{h(x)} ].
\]

\subsection*{Approaches that fail}

There are several approaches that may appear to solve this problem, but turn out to be inadequate: 
\begin{enumerate}
\item {\it Supervised learning}.  We could learn a regressor $s: X
  \times A \rightarrow [0,1]$ which is trained to predict the reward,
  on observed events conditioned on the action $a$ and other
  information $x$.  From this regressor, a policy is derived according
  to $h(x) = \argmax_{a\in A} s(x,a)$.  A flaw of this approach is
  that the $\argmax$ may extend over a set of choices not included in
  the training data, and hence may not generalize at all (or only
  poorly).  This can be verified by considering some extreme cases.
  Suppose that there are two actions $a$ and $b$ with action $a$
  occurring $10^6$ times and action $b$ occuring $10^2$ times.  Since
  action $b$ occurs only a $10^{-4}$ fraction of the time, a learning
  algorithm forced to trade off between predicting the expected value
  of $r_a$ and $r_b$ overwhelmingly prefers to estimate $r_a$ well at
  the expense of accurate estimation for $r_b$.  And yet, in
  application, action $b$ may be chosen by the argmax.  This problem
  is only worse when action $b$ occurs zero times, as might commonly
  occur in exploration situations.
\item {\it Bandit approaches}.  In the standard setting these
  approaches suffer from the curse of dimensionality, because they
  must be applied conditioned on $X$.  In particular, applying them
  requires data linear in $X \times A$, which is extraordinarily
  wasteful.  In essence, this is a failure to take advantage of
  generalization.
\item {\it Contextual Bandits}.  Existing approaches to contextual
  bandits such as EXP4 \cite{auer02b} or Epoch Greedy
  \cite{langford08b}, require either interaction to gather data or
  require knowledge of the probability the logging policy chose the
  action $a$.  In our case the probability is unknown, and it may in
  fact always be $1$.
\item {\it Exploration Scavenging}. It is possible to recover
  exploration information from action visitation frequency when a
  logging policy chooses actions independent of the input $x$ (but
  possibly dependent on history)~\cite{langford08}.  This doesn't fit
  our setting, where the logging policy is surely dependent on the
  query.
  %Extensions of this idea do not easily apply because we have no knowledge of which policy is being applied at each timestep.
\end{enumerate}

\subsection*{Our Approach}

Our approach 
%is essentially an alternative approach to exploration scavenging which 
naturally breaks down into three steps.
\begin{enumerate}
\item For each event $(x,a,r_a)$, estimate the probability
  $\hat{\pi}(a|x)$ that the logging policy chooses action $a$ using
  regression.
\item For each event, create a synthetic controlled contextual bandit
  event according to $(x, a, r_a, 1/\max\{\hat{\pi}(a|x),\tau \})$ where $\tau >0$ is
  some parameter.  The fourth element in this tuple, $1/\max\{\hat{\pi}(a|x),\tau \}$, is 
  an {\it importance weight} that specifies how important the current event is for training.
  The parameter $\tau$ may appear mysterious at first,
  but is critical for numeric stability.
\item Apply an offline contextual bandit algorithm to the set of
  synthetic contextual bandit events.  In our experimental results a
  variant of the argmax regressor is used with two critical modifications:
\begin{enumerate} 
\item We limit the scope of the argmax to those actions with positive
  probability.
\item We importance weight events so that the training process
  emphasizes good estimation for each action equally.
\end{enumerate}
  It should be emphasized that the theoretical analysis in this paper
  applies to \emph{any} algorithm for learning on contextual bandit
  events---we chose this one because it is a simple modification on
  existing (but fundamentally broken) approaches.
\end{enumerate}

Three critical questions arise when considering this approach.
\begin{enumerate}
\item What does $\hat{\pi}(a|x)$ mean, given that the logging policy may
  be deterministically choosing an action (ad) $a$ given features $x$?  The
  essential observation is that a policy which deterministically
  chooses action $a$ on day $1$ and then deterministically chooses action $b$
  on day $2$ can be treated as randomizing between actions $a$ and $b$
  with probability $0.5$ when the number of events is the same each
  day, and the events are IID.  Thus $\hat{\pi}(a|x)$ is an estimate of
  the expected frequency with which action $a$ would be displayed given
  features $x$ over the timespan of the logged events.  In
  section~\ref{s:analysis} we show that this approach is sound in the
  sense that in expectation it provides an unbiased estimate of the
  value of new policy.  
\item How do the inevitable errors in $\hat{\pi}(a|x)$ influence the
  process?  It turns out they have an effect which is dependent on
  $\tau$.  For very small values of $\tau$, the estimates of
  $\hat{\pi}(a|x)$ must be extremely accurate to yield good performance
  while for larger values of $\tau$ less accuracy is required.  In
  Section~\ref{s:stoc}, we prove this robustness property.
\item What influence does the parameter $\tau$ have on the final
  result?  While creating a bias in the estimation process, it turns
  out that the form of this bias is mild and relatively
  reasonable---actions which are displayed with low frequency conditioned
  on $x$ effectively have an underestimated value.  This is exactly as
  expected for the limit where actions have \emph{no} frequency.
  In section~\ref{s:stoc} we prove this.
\end{enumerate}

We close with a generalization from policy evaluation to policy
selection with a sample complexity bound in section~\ref{s:opt} and
then experimental results in section~\ref{s:experiments} using a real
ad dataset.

\section{Formal Problem Setup and Assumptions}
\label{s:setup}

Let $\pi_1,...,\pi_T$ be $T$ policies, where, for each $t$, $\pi_t$
is a function mapping an input from $X$ to a (possibly deterministic)
distribution over $A$.  The learning algorithm is given a
dataset of $T$ samples, each of the form $(x,a,r_{a}) \in X \times A
\times [0,1]$, where $(x,r)$ is drawn from $D$ as described in
Section~\ref{s:bandit}, and the action $a \sim \pi_t(x)$ is chosen
according to the $t$th policy.  We denote this random process by
$(x,a,r_{a}) \sim (D,\pi_t(\cdot|x))$.  Similarly, interaction with the $T$ policies results in
a sequence $S$ of $T$ samples, which we denote $S\sim\Qexpl$.  The learner is not given prior knowledge of the $\pi_t$.

\subsection*{Offline policy estimator}
Given a dataset of the form
\begin{equation}
S=\{(x_t,a_t,r_{t,a_t})\}_{t=1}^T,
%~\textrm{where}~\forall t,  x_t \in X, a_t \in A, r_{t,a_t} \in [0,1],
\end{equation}
where $\forall t,  x_t \in X, a_t \in A, r_{t,a_t} \in [0,1]$,
we form a predictor $\hat\pi:X\times A \to [0,1]$ and then use
it with a threshold $\tau \in [0,1]$ to form an offline estimator for
the value of a policy $h$.

Formally, given a new policy $h:X\to A$ and a dataset $S$, define the estimator:
\begin{equation}
\label{e:est}
\hat{V}^h_{\hat\pi}(S) = \frac{1}{|S|} \sum_{(x,a,r)\in S} \frac{r_a I(h(x)=a)}{\max\{\hat\pi(a|x),\tau\}},
\end{equation}
where $I(\cdot)$ denotes the indicator function.

The purpose of $\tau$ is to upper bound the individual terms in the sum and is similar to previous methods ~\cite{Owen98safeand}.

\section{Theoretical Results}
\label{s:analysis}

We now present our algorithm and main theoretical results. The main
idea is twofold: first, we have a policy estimation step, where we
estimate the (unknown) logging policy (analyzed in
Subsection~\ref{s:stoc}); second, we have a policy optimization step,
where our we utilize our estimated logging policy (analyzed in
Subsection~\ref{s:opt}).  Our main result, Theorem~\ref{t:opt},
provides a generalization bound --- addressing the issue of how both
the estimation and optimization error contribute to the total error.

%The estimator in Equation~\ref{e:est} is computed with respect to $\hat{\pi}$, an estimate of the relative frequency amongst $T$ historical logging policies.  

The logging policy $\pi_t$ may be deterministic, implying that
conventional approaches relying on randomization in the logging policy
are not applicable.  We show next that this is ok when the world is
IID and the policy varies over its actions.  We effectively
substitute the standard approach of randomization in the algorithm for
randomization in the world.

A basic claim is that the estimator is expectation equivalent to a
stochastic policy defined as follows:
\begin{equation}
\label{d:stoc_p}
\pi(a|x) = \EE_{t \sim \mathrm{UNIF}(1,\ldots,T)} [ \pi_t(a|x) ],
\end{equation}
where $\mathrm{UNIF}(\cdots)$ denotes the uniform distribution.
The stochastic policy $\pi$ chooses an action uniformly at random over the $T$ policies $\pi_t$.  Our first result is that the expected value of our estimator is the same when the world chooses actions according to either $\pi$ or to the sequence of policies $\pi_t$.  Although this result and its proof are straight-forward, it forms the basis for the rest of the results in our paper.  Note that the policies $\pi_t$ may be arbitrary but we have assumed that they do not depend on the data used for evaluation.  Allowing for the offline evaluation of policies using the same data they are trained on is an important open problem.

\begin{theorem}
\label{t:main}
For any contextual bandit problem $D$ with identical draws over $T$
rounds, for any sequence of possibly stochastic policies $\pi_t(a|x)$
with $\pi$ derived as above, and for any predictor $\hat{\pi}$,
\begin{equation}
\label{fund}
E_{S \sim \Qexpl}\hat{V}^h_{\hat{\pi}}(S) = E_{(x,\vec{r})\sim D, a \sim \pi(\cdot|x)} \frac{r_aI(h(x)=a)}{\max\{ \hat{\pi}(a|x),\tau \}}
\end{equation}
\end{theorem}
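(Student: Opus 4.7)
The plan is a direct expectation calculation that exploits (i) linearity, (ii) the IID assumption on the draws from $D$, and (iii) the definition of $\pi$ as the time-average of the $\pi_t$.

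First I would expand the left-hand side. By definition of the estimator in \eqref{e:est} and the fact that $|S|=T$, linearity of expectation gives
\[
\EE_{S \sim \Qexpl}\hat{V}^h_{\hat{\pi}}(S) = \frac{1}{T}\sum_{t=1}^{T} \EE_{(x,\vec{r})\sim D,\,a\sim\pi_t(\cdot|x)} \left[\frac{r_a I(h(x)=a)}{\max\{\hat{\pi}(a|x),\tau\}}\right].
\]
This is the key step: since the samples in $S$ are drawn independently across rounds (each $(x_t,\vec{r}_t)$ is an independent draw from $D$ and $a_t$ is drawn conditionally from $\pi_t(\cdot|x_t)$, and the $\pi_t$ do not depend on the evaluation data by assumption), the expectation of the sum decomposes into a sum of per-round expectations.

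Next I would pull the sum over $t$ inside the expectation over $(x,\vec{r})$. Because the marginal law of $(x,\vec{r})$ is the same $D$ in every round, I can write the $t$-th summand as $\EE_{(x,\vec{r})\sim D}\bigl[\sum_a \pi_t(a|x)\,\phi(x,\vec{r},a)\bigr]$ with $\phi(x,\vec{r},a)=r_a I(h(x)=a)/\max\{\hat{\pi}(a|x),\tau\}$, and then swap the finite sums over $t$ and $a$:
\[
\frac{1}{T}\sum_{t=1}^T \sum_{a\in A}\pi_t(a|x)\,\phi(x,\vec{r},a) = \sum_{a\in A}\Bigl(\tfrac{1}{T}\sum_{t=1}^T \pi_t(a|x)\Bigr)\phi(x,\vec{r},a).
\]
Applying definition \eqref{d:stoc_p}, the bracketed factor is exactly $\pi(a|x)$, so the inner expression collapses to $\EE_{a\sim\pi(\cdot|x)}[\phi(x,\vec{r},a)]$, which yields the right-hand side of \eqref{fund}.

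There is no real obstacle here: the only subtlety to flag is the independence across rounds, which requires the assumption (explicitly noted in the paper) that each $\pi_t$ is determined independently of the data being evaluated, so that $(x_t,a_t,r_{t,a_t})$ are independent across $t$ and linearity of expectation applies cleanly. Everything else is relabeling and the definition of $\pi$.
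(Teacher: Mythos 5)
Your proposal is correct and is essentially the paper's own proof run in the opposite direction: the paper starts from the right-hand side, substitutes the definition of $\pi$ as the time-average of the $\pi_t$, swaps the finite sums, and uses the identical-draws assumption to reassemble the $T$-round expectation, while you expand the left-hand side by linearity and collapse the average of the $\pi_t$ into $\pi$ --- the same steps in reverse. One minor remark: independence across rounds is not actually needed for your key step, since linearity of expectation only requires that each round's marginal law be $(x,\vec{r})\sim D$, $a\sim\pi_t(\cdot|x)$, which is exactly the identical-draws assumption the paper invokes.
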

This theorem relates the expected value of our estimator when $T$
policies are used to the much simpler and more standard setting where
a single fixed stochastic policy is used.  
\begin{proof}
\begin{eqnarray*}
& & E_{(x,\vec{r})\sim D, a \sim \pi(\cdot|x)} \frac{r_aI(h(x)=a)}{\max\{ \hat{\pi}(a|x),\tau \}} \\
& = & E_{(x,\vec{r})\sim D} \sum_a \pi(a|x) \frac{r_aI(h(x)=a)}{\max\{ \hat{\pi}(a|x),\tau \}} \\
& = & E_{(x,\vec{r})\sim D} \sum_a \frac{1}{T} \sum_t \pi_t(a|x) \frac{r_aI(h(x)=a)}{\max\{ \hat{\pi}(a|x),\tau \}} \\
& = & E_{(x,\vec{r})\sim D} \frac{1}{T} \sum_t \sum_a  \pi_t(a|x) \frac{r_aI(h(x)=a)}{\max\{ \hat{\pi}(a|x),\tau \}} \\
& = & E_{(x,\vec{r})\sim D} \frac{1}{T} \sum_t E_{a \sim \pi_t(\cdot|x)} \frac{r_aI(h(x)=a)}{\max\{ \hat{\pi}(a|x),\tau \}} \\
& = & E_{(x,\vec{r})^T\sim D^T} \frac{1}{T} \sum_t E_{a_t \sim \pi_t(\cdot|x_t)} \frac{r_{i,a_t}I(h(x_t)=a_t)}{\max\{ \hat{\pi}(a_t|x_t),\tau \}} \\
& = & E_{(x,\vec{r})^T\sim D^T, a_t \sim \pi_t(\cdot|x)} \frac{1}{T} \sum_t \frac{r_{i,a_t}I(h(x_t)=a_t)}{\max\{ \hat{\pi}(a_t|x_t),\tau \}} \\
& = & E_{S\sim \Qexpl} \frac{1}{|S|} \sum_{(x,a,r)\in S} \frac{r_a I(h(x)=a)}{\max\{ \hat{\pi}(a|x),\tau \}}
\end{eqnarray*}
%The notation $D^T$ here denotes the process of choosing $T$ samples from $D$ in an i.i.d. fashion.
Each equality follows form linearity of expectation, relabeling, or
the definition of expectation.  The identical draws assumption is used
in 6th equality.
\end{proof}

\subsection{Policy Estimation}
\label{s:stoc}

In this section we show that for a suitable choice of $\tau$ and
$\hat{\pi}$ our estimator is sufficiently accurate for evaluating new
policies $h$.  We aggressively use the simplification of the previous
section, which shows that we can think of the data as generated by a fixed
stochastic policy $\pi$, i.e. $\pi_t = \pi$ for all $t$.

\newcommand{\reg}{\mathop{\rm reg}}

For a given estimate $\hat{\pi}$ of $\pi$ define the ``regret'' to be
a function $\reg\!:\!X\to[0,1]$ by
\begin{equation}
\label{e:squared_regret}
\reg(x) = \max_{a \in \mathcal{A}} \left[{ (\pi(a|x) - \hat\pi(a|x))^2 }\right].
\end{equation}

Our first result is that the new estimator is consistent.  In the following theorem statement, $I(\cdot)$ denotes the indicator function, $\pi(a|x)$ the probability that the logging policy chooses action $a$ on input $x$, and $\hat{V}_{\hat\pi}^h$ our estimator as defined by Equation~\ref{e:est} based on parameter $\tau$.

\begin{lemma}
\label{l:res_est}
Let $\hat\pi$ be any function from $X$ to distributions over actions $A$.  Let $h:X \to A$ be any deterministic policy.  Let $V^h(x) = \EE_{r\sim D(\cdot|x)}[r_{h(x)}]$ denote the expected value of executing policy $h$ on input $x$.  We have that 
%\begin{equation}
%\label{e:b}
$$
\EE_x \left[{ I(\pi(h(x)|x)\geq\tau)\cdot\left( V^h(x) - \frac{\sqrt{\reg(x)}}{\tau}\right) }\right] ~\leq~
$$
$$
\EE[\hat{V}_{\hat\pi}^h] ~\leq~
$$
$$
 V^h + \EE_x \left[{ I(\pi(h(x)|x)\geq\tau) \cdot \frac{\sqrt{\reg(x)}}{\tau}}\right].
$$
%\end{equation}
In the above, the expectation $\EE[\hat{V}_{\hat\pi}^h]$ is taken over all sequences of $T$ tuples $(x,a,r)$ where $(x,r) \sim D$ and $a \sim \pi(\cdot|x)$.\footnote{Note that varying $T$ does not change the expectation of our estimator, so $T$ has no effect in the theorem.}
\end{lemma}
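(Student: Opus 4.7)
The plan is to first apply Theorem~\ref{t:main} to replace the sequence of logging policies with the single stochastic policy $\pi$, so that the expectation runs over $(x,\vec r)\sim D$ and $a\sim\pi(\cdot|x)$. Because $h$ is deterministic, the inner sum over $a$ collapses: only $a=h(x)$ survives the indicator $I(h(x)=a)$, and after integrating $r_{h(x)}$ against $D(\cdot|x)$ one obtains
\[ \EE[\hat V^h_{\hat\pi}] \;=\; \EE_x\!\left[\frac{\pi(h(x)|x)\,V^h(x)}{\max\{\hat\pi(h(x)|x),\tau\}}\right]. \]
The problem then reduces to a pointwise comparison of the ratio $\pi(h(x)|x)/\max\{\hat\pi(h(x)|x),\tau\}$ with its ``oracle'' counterpart $\pi(h(x)|x)/\max\{\pi(h(x)|x),\tau\}$.

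The key algebraic identity I would use is
\[ \frac{\pi}{\max\{\hat\pi,\tau\}} \;=\; \frac{\pi}{\max\{\pi,\tau\}} \;+\; \pi\cdot\frac{\max\{\pi,\tau\}-\max\{\hat\pi,\tau\}}{\max\{\pi,\tau\}\,\max\{\hat\pi,\tau\}}, \]
where I have abbreviated $\pi = \pi(h(x)|x)$ and $\hat\pi=\hat\pi(h(x)|x)$. Since $u\mapsto\max\{u,\tau\}$ is $1$-Lipschitz, the numerator in the error term has absolute value at most $|\pi-\hat\pi|\le\sqrt{\reg(x)}$ (the last step uses that $h(x)\in\mathcal{A}$ in the definition of $\reg$). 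Combining this with the two harmless bounds $\pi/\max\{\pi,\tau\}\le 1$ and $\max\{\hat\pi,\tau\}\ge\tau$ shows that the error term is at most $\sqrt{\reg(x)}/\tau$ in magnitude. This is the crucial step: it produces the $1/\tau$ rate rather than the naive $1/\tau^2$, because one factor of $\tau$ in the denominator is absorbed by the factor of $\pi$ sitting outside.

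The remainder is a case split on the indicator $I(\pi(h(x)|x)\ge\tau)$. On the event $\{\pi\ge\tau\}$ the oracle ratio equals $1$, so the integrand lies in the band $V^h(x)\pm\sqrt{\reg(x)}/\tau$, which immediately gives the contribution to both inequalities on this event. On the complementary event $\{\pi<\tau\}$ we have $\max\{\hat\pi,\tau\}\ge\tau$, hence $0\le\pi V^h/\max\{\hat\pi,\tau\}\le \pi V^h(x)/\tau\le V^h(x)$; this contribution is simply absorbed into the unconditional $V^h$ term for the upper bound and dropped by nonnegativity for the lower bound, which is exactly why the indicator $I(\pi(h(x)|x)\ge\tau)$ appears on the lower side of the lemma but not on the $V^h$ of the upper side. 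Taking $\EE_x$ of the pointwise inequalities and summing the two regions yields the stated bounds.

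The main obstacle is the bookkeeping in the algebraic step: one has to choose which of $\pi$, $\max\{\pi,\tau\}$, and $\max\{\hat\pi,\tau\}$ to cancel against which in order to land on $\sqrt{\reg(x)}/\tau$ rather than a weaker $\sqrt{\reg(x)}/\tau^2$. Everything else --- the reduction via Theorem~\ref{t:main}, the Lipschitz bound, and the case split --- is essentially forced once this is done correctly.
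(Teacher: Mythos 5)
Your proposal is correct and follows essentially the same route as the paper: reduce to the per-$x$ quantity $\pi(h(x)|x)V^h(x)/\max\{\hat\pi(h(x)|x),\tau\}$, split on whether $\pi(h(x)|x)\geq\tau$, drop the $\pi(h(x)|x)<\tau$ region by nonnegativity for the lower bound and bound it by $V^h(x)$ for the upper bound, and on the other region bound the deviation by $\sqrt{\reg(x)}/\tau$. Your Lipschitz/oracle-ratio identity is just a repackaging of the paper's bound $|\pi-\max\{\hat\pi,\tau\}|\leq|\pi-\hat\pi|$ (the two coincide when $\pi(h(x)|x)\geq\tau$), so the arguments are the same in substance.
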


This lemma bounds the bias in our estimate of $V^h(x)$.  There are two
sources of bias---one from the error of $\hat{\pi}(a|x)$ in estimating
$\pi(a|x)$, and the other from threshold $\tau$.  For the first
source, it's crucial that we analyze the result in terms of the
squared loss rather than (say) $\ell_\infty$ loss, as reasonable
sample complexity bounds on the regret of squared loss estimates are
achievable.  
\begin{proof}
Consider a fixed $x$.  Define the following quantity
\begin{eqnarray*}
\delta_x &=& \frac{\pi(h(x)|x)}{\max\{\hat \pi (h(x)|x),\tau\}}V^h(x) - V^h(x).
\end{eqnarray*}
The quantity $\delta_x$ is the error of our estimator conditioned on $x$ and satisfies $\EE_x[\delta_x] = \EE[\hat{V}_{\hat\pi}^h] - V^h$.
Note that $|\delta_x| \leq \left|{ \frac{\pi(h(x)|x)}{\max\{\hat \pi (h(x)|x),\tau\}} - 1}\right|$.

We consider two disjoint cases.  

First, suppose  that $\pi(h(x)|x) < \tau$.  Then, $\delta_x$ is less than or equal to zero, due to the max operation in the denominator and the fact that rewards are positive.  Thus, we have that $E[\hat{V}_{\hat{\pi}}^h] \leq V^h $, when the expectation is taken over the $x$ for which $\pi(h(x)|x) < \tau$.  As an aside, note that $|\delta_x|$ can have magnitude as large as 1.  In other words, in this situation, the estimator may drastically underestimate the value of policy $h$ but will never overestimate it.

Second, suppose that $\pi(h(x)|x) \geq \tau$.  
Then, we have that 
\begin{eqnarray*}
\lefteqn{ |\delta_x| } \\
& \leq & \left |{\frac{\pi(h(x)|x)-\max\{\hat \pi (h(x)|x),\tau\}}{\max\{\hat \pi (h(x)|x),\tau\}}} \right|\\
& \leq & \frac{ \sqrt{\reg(x)}}{\tau}.
\end{eqnarray*}

Expanding $\delta_x$ and taking the expectation over $x$ for which $\pi(h(x)|x) \geq \tau$ yields the desired result.
\end{proof}

\begin{corollary}
\label{c:res_est}
Let $\hat\pi$ be any function from $X$ to distributions over actions $A$.  Let $h:X \to A$ be any deterministic policy.  If $\pi(h(x)|x) \geq \tau$ for all inputs $x$, then
\begin{equation}
\label{e:est_bound_tau}
|\EE[\hat{V}_{\hat\pi}^h]-V^h| ~\leq~
\frac{\sqrt{\EE_x[\reg(x)]}}{\tau}.
\end{equation}
\end{corollary}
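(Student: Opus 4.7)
The plan is to invoke Lemma~\ref{l:res_est} and then combine a trivial simplification with a single application of Jensen's inequality. Since the lemma is already a two-sided bound conditioned on the event $\pi(h(x)|x)\geq \tau$, the main task is just to recognize that this conditioning becomes vacuous under the corollary's hypothesis.

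First I would apply Lemma~\ref{l:res_est} directly. Under the assumption $\pi(h(x)|x) \geq \tau$ for all $x$, the indicator $I(\pi(h(x)|x)\geq\tau)$ equals $1$ almost surely, so the lower and upper bounds of the lemma collapse to
\begin{equation*}
V^h - \EE_x\!\left[\frac{\sqrt{\reg(x)}}{\tau}\right] ~\leq~ \EE[\hat V^h_{\hat\pi}] ~\leq~ V^h + \EE_x\!\left[\frac{\sqrt{\reg(x)}}{\tau}\right].
\end{equation*}
Rearranging gives $|\EE[\hat V^h_{\hat\pi}] - V^h| \leq \EE_x[\sqrt{\reg(x)}]/\tau$.

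Second, I would pass from $\EE_x[\sqrt{\reg(x)}]$ to $\sqrt{\EE_x[\reg(x)]}$ by Jensen's inequality: since $\sqrt{\cdot}$ is concave on $[0,\infty)$, we have $\EE_x[\sqrt{\reg(x)}] \leq \sqrt{\EE_x[\reg(x)]}$. Dividing by $\tau$ yields the claimed bound.

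There is no real obstacle here; the only substantive point is deciding to state the bound in terms of the \emph{expected} squared regret $\EE_x[\reg(x)]$ rather than the expected root-regret, which is the version that will connect cleanly to standard sample-complexity guarantees for squared-loss regression later on. Everything else is bookkeeping from the lemma.
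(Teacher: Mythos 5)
Your proposal is correct and matches the paper's argument: the paper's one-line proof likewise reduces to the pointwise bound $|\delta_x|\leq\sqrt{\reg(x)}/\tau$ (which is exactly what the lemma's two-sided bound gives once the indicator is identically $1$ under the hypothesis) followed by Jensen's inequality for the concave square root. The only cosmetic difference is that you work from the lemma's statement while the paper cites the second case of its proof; the content is identical.
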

\begin{proof}
Follows from examining the second part of the proof of Lemma~\ref{l:res_est} and applying Jensen's inequality.
\end{proof}

Lemma~\ref{l:res_est} shows that the expected value of our estimate
$\hat{V}_\pi^h$ of a policy $h$ is an approximation to a lower bound of the
true value of the policy $h$ where the approximation is due to errors
in the estimate $\hat{\pi}$ and the lower bound is due to the
threshold $\tau$.  When $\hat{\pi} = \pi$, then the statement of Lemma~\ref{l:res_est} simplifies to
$$
\EE_x \left[{ I(\pi(h(x)|x)\geq\tau)\cdot V^h(x) }\right] ~\leq~
\EE[\hat{V}_{\hat\pi}^h] ~\leq~
 V^h.
$$
Thus, with a perfect predictor of $\pi$, the expected value of the estimator $\hat{V}_{\hat\pi}^h$ is a guaranteed lower bound on the true value of policy $h$.  However, as the left-hand-side of this statement suggests, it may be a very loose bound, especially if the action chosen by $h$ often has a small probability of being chosen by $\pi$.

The dependence on $1/\tau$ in Lemma~\ref{l:res_est} is somewhat
unsettling, but unavoidable.  Consider an instance of the
bandit problem with a single input $x$ and two actions $a_1,a_2$.
Suppose that $\pi(a_1|x) = \tau + \epsilon$ for some positive
$\epsilon$ and $h(x) = a_1$ is the policy we
are evaluating.  Suppose further that the rewards are always $1$ and
that $\hat{\pi}(a_1|x) = \tau$.  Then, the estimator satisfies
$E[\hat{V}_{\hat{\pi}}^h] = \pi(a_1|x)/{\hat{\pi}(a_1|x)} = (\tau +
\epsilon)/\tau$.  Thus, the expected error in the estimate is
$E[\hat{V}_{\hat{\pi}}^h] - V^h = |(\tau + \epsilon)/\tau - 1| =
\epsilon/\tau$, while the regret of $\hat{\pi}$ is $(\pi(a_1|x) -
\hat{\pi}(a_1|x))^2 = \epsilon^2$.

\subsection{Policy Optimization}
\label{s:opt}

The previous section proves that we can effectively evaluate a policy
$h$ by observing a stochastic policy $\pi$, as long as the actions
chosen by $h$ have adequate support under $\pi$, specifically
$\pi(h(x)|x) \geq \tau$ for all inputs $x$.  However, we are often
interested in choosing the best policy $h$ from a set of policies
$\cal H$ after observing logged data.  Furthermore, as described in
Section~\ref{s:setup}, the logged data are generated from $T$ fixed,
possibly deterministic, policies $\pi_1, \ldots, \pi_T$ as described
in section~\ref{s:setup} rather than a single stochastic policy.  As
in Section~\ref{s:analysis} we define the stochastic policy $\pi$ by
Equation~\ref{d:stoc_p},
\begin{equation*}
\pi(a|x) = \EE_{t \sim \mathrm{UNIF}(1,\ldots,T)} [ \pi_t(a|x) ]
\end{equation*}
The results of Section~\ref{s:stoc} apply to the policy optimization
problem.  However, note that the data are now assumed to be drawn from
the execution of a sequence of $T$ policies $\pi_1,\ldots, \pi_T$,
rather than by $T$ draws from $\pi$.

Next, we show that it is possible to compete well with the best
hypothesis in $\cal H$ that has adequate support under $\pi$ (even
though the data are not generated from $\pi$).  

%Note that the result in this section requires independence of draws from each round as per standard sample complexity results.

\begin{theorem}
\label{t:opt}

Let $\hat\pi$ be any function from $X$ to distributions over actions $A$.
Let $\cal H$ be any set of deterministic policies.
Define $\tilde{\cal H} = \{ h\in\mathcal{H}~|~\pi(h(x)|x) > \tau, ~\forall ~x \in X \}$ %
and $\tilde{h} = \argmax_{h \in \tilde{ \mathcal{H} }} \{ V^h \}$.
Let $\hat{h} = \argmax_{h \in {\mathcal{H}}} \{ \hat{V}_{\hat\pi}^{h} \}$ be the hypothesis that maximizes the empirical value estimator defined in Equation~\ref{e:est}.  Then,
with probability at least $1-\delta$,
\begin{equation}
V^{\hat{h}} \geq V^{\tilde{h}} - \frac{2}{\tau}\left( \sqrt{\EE_x[\reg(x)]} + \sqrt{\frac{\ln(2|H|/\delta)}{2T}} \right),
\end{equation}
where $\reg(x)$ is defined, with respect to $\pi$, in Equation~\ref{e:squared_regret}.
\end{theorem}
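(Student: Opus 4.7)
My strategy is the standard ERM sandwich: combine the bias control from Lemma~\ref{l:res_est} with a Hoeffding-based uniform concentration of $\hat V^h_{\hat\pi}$ over $\mathcal H$, and chain through the definition of $\hat h$ as the empirical maximizer. Theorem~\ref{t:main} lets me replace the expectation under $\Qexpl$ by the expectation under the mixture policy $\pi$ throughout, so Lemma~\ref{l:res_est} and Corollary~\ref{c:res_est} apply directly to the empirical estimator.

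For the concentration step, I observe that every summand in $\hat V^h_{\hat\pi}(S)=\tfrac{1}{T}\sum_t r_{t,a_t}I(h(x_t)=a_t)/\max\{\hat\pi(a_t|x_t),\tau\}$ lies in $[0,1/\tau]$, and the $T$ summands are mutually independent since the pairs $(x_t,r_t)$ are drawn i.i.d.\ from $D$ and $a_t$ is drawn from $\pi_t(\cdot|x_t)$ conditionally on $x_t$. Hoeffding's inequality for a fixed $h$, combined with a union bound over $\mathcal H$, therefore gives, with probability at least $1-\delta$,
$$\sup_{h\in\mathcal H}\bigl|\hat V^h_{\hat\pi}(S)-\EE\hat V^h_{\hat\pi}\bigr|\;\le\;\epsilon,\qquad \epsilon:=\tfrac{1}{\tau}\sqrt{\tfrac{\ln(2|\mathcal H|/\delta)}{2T}}.$$

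Conditioning on this event, I would chain four inequalities. First, $V^{\hat h}\ge\EE\hat V^{\hat h}_{\hat\pi}-\tfrac{1}{\tau}\sqrt{\EE_x[\reg(x)]}$, which follows from the upper bound in Lemma~\ref{l:res_est} together with Jensen's inequality to pull $\sqrt{\cdot}$ outside the expectation; crucially this holds for every $h$, including the case $\hat h\notin\tilde{\mathcal H}$. Second, concentration gives $\EE\hat V^{\hat h}_{\hat\pi}\ge\hat V^{\hat h}_{\hat\pi}-\epsilon$. Third, empirical optimality of $\hat h$ gives $\hat V^{\hat h}_{\hat\pi}\ge\hat V^{\tilde h}_{\hat\pi}$, and concentration again gives $\hat V^{\tilde h}_{\hat\pi}\ge\EE\hat V^{\tilde h}_{\hat\pi}-\epsilon$. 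Fourth, since $\tilde h\in\tilde{\mathcal H}$, Corollary~\ref{c:res_est} yields $\EE\hat V^{\tilde h}_{\hat\pi}\ge V^{\tilde h}-\tfrac{1}{\tau}\sqrt{\EE_x[\reg(x)]}$. Adding the four inequalities and substituting $\epsilon$ produces exactly the stated bound.

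The main subtlety is the asymmetric use of Lemma~\ref{l:res_est}: for $\tilde h$ I have the two-sided Corollary~\ref{c:res_est} because $\tilde h$ has adequate support under $\pi$, but $\hat h$ need not, so I can only invoke the upper-bound side of the lemma for it. Fortunately that side is exactly what is needed to bound $V^{\hat h}$ from below in terms of $\EE\hat V^{\hat h}_{\hat\pi}$; the lower-bound side, which becomes vacuous when $\pi(\hat h(x)|x)<\tau$, never enters the argument. Once that observation is made, the rest is a textbook two-$\epsilon$ uniform-convergence argument and I anticipate no further obstacles.
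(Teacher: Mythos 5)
Your proposal is correct and follows essentially the same route as the paper's proof: a Hoeffding-plus-union-bound concentration step with $\epsilon=\frac{1}{\tau}\sqrt{\ln(2|\mathcal H|/\delta)/(2T)}$, Theorem~\ref{t:main} to identify the expectation under the logged sequence with that under the mixture policy $\pi$, the one-sided (upper-bound) half of Lemma~\ref{l:res_est} with Jensen for $\hat h$, empirical optimality of $\hat h$, and Corollary~\ref{c:res_est} for $\tilde h\in\tilde{\mathcal H}$. Your remark on the asymmetric use of the lemma (only the upper-bound side is needed for $\hat h$, which may lack support under $\pi$) is exactly the point the paper's chained-inequality argument relies on, so there is nothing to add.
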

\begin{proof}
First, given a dataset $(x_t,a_t,r_{t,a_t})$, $t= 1,\ldots,T$, generated by the process described in Section~\ref{s:setup}, note that it is straight-forward to apply Hoeffding's bound \cite{Hoeffding63} to the random variables $X_t = \frac{I(h(x_t)=a_t)r_{t,a_t}}{\max\{\hat{\pi}(a_t|x_t),\tau\}}$, to show that 
$|\hat{V}_{\hat\pi}^{h} - \EE[\hat{V}_{\hat\pi}^{h}]| \leq \frac{1}{\tau}\sqrt{\frac{\ln(2/\delta)}{2T}}$ 
holds with probability at least $1-\delta$, for a fixed policy $h$.  It is important to note here that the $X_t$ are independent but not identical, since the action at time $t$ is chosen according to policy $\pi_t$.  The previous argument can be made to hold for all $h \in H$ by replacing $\delta$ with $\delta/|H|$ and applying the union bound.

Let $Q = \Qexpl$ be the distribution over sequences of $T$ samples $(x,a,r_a) \in X \times A \times [0,1]$ generated by executing the $T$ logging policies $\pi_t$ in sequence, as described in section~\ref{s:setup}.  Let $Q' = \Qpexpl$ be the distribution over samples of the form $(x,a,r_a) \in X \times A \times [0,1]$ such that $(x,r) \sim D$ and $a \sim \pi(\cdot|x)$.  The $T$ samples used in the estimator $\hat{V}_{\pi}^{{h}}$ are obtained from a single draw from $Q$. 

\def\vhathat{\hat{V}_{\hat\pi}^{\hat{h}}}
\def\vhattilde{\hat{V}_{\hat\pi}^{\tilde{h}}}
\def\epssl{\epsilon_{\mathrm{SL}}}

Now, we have that
\begin{eqnarray*}
\lefteqn{ V^{\hat{h}} } \\
& \geq & \EE_{Q'}{[\vhathat]} - \EE_x \left[{ I(\pi(\hat{h}(x)|x)\geq\tau) \cdot \frac{\sqrt{\reg(x)}}{\tau}}\right] \\
& \geq & \EE_{Q'}{[\vhathat]} - \EE_x \left[{\frac{\sqrt{\reg(x)}}{\tau}}\right] \\
& \geq & \EE_{Q'}{[\vhathat]} - \frac{ \sqrt{\EE_x[\reg(x)]} }{\tau} \\
& = & \EE_{Q}{[\vhathat]} - \frac{ \sqrt{\EE_x[\reg(x)]} }{\tau} \\
& \geq & \vhathat - \frac{\sqrt{\EE_x[\reg(x)]}}{\tau} - \frac{1}{\tau}\sqrt{\frac{\ln(2|H|/\delta)}{2T}} \\
& \geq & \vhattilde - \frac{\sqrt{\EE_x[\reg(x)]}}{\tau} - \frac{1}{\tau}\sqrt{\frac{\ln(2|H|/\delta)}{2T}} \\
& \geq & \EE_{Q}[\vhattilde] - \frac{\sqrt{\EE_x[\reg(x)]}}{\tau} - \frac{2}{\tau}\sqrt{\frac{\ln(2|H|/\delta)}{2T}} \\
& = & \EE_{Q'}[\vhattilde] - \frac{\sqrt{\EE_x[\reg(x)]}}{\tau} - \frac{2}{\tau}\sqrt{\frac{\ln(2|H|/\delta)}{2T}} \\
& \geq & V^{\tilde{h}} - \frac{2\sqrt{\EE_x[\reg(x)]}}{\tau} - \frac{2}{\tau}\sqrt{\frac{\ln(2|H|/\delta)}{2T}}.
\end{eqnarray*}

The first step follows from Lemma~\ref{l:res_est}.  The second from the fact that regret is always non-negative.  The third from an application of Jensen's inequality. The forth and eighth from an application of Theorem~\ref{t:main}.  The fifth and seventh from an application of Hoeffding's bound as detailed above.  The sixth from the definition of $\tilde{h}$.  The final step follows from Corollary~\ref{c:res_est} and observing that $\tilde{h} \in \tilde{H}$.
\end{proof}

The proof of Theorem~\ref{t:opt} relies on the
lower-bound property of our estimator (the left-hand side of
Inequality stated in Lemma~\ref{l:res_est}).  In other words, if $\cal H$ contains a very good
policy that has little support under $\pi$, we will not be able to
detect that by our estimator.  On the other hand, our estimation is
safe in the sense that we will never drastically overestimate the
value of any policy in $\cal H$.  This ``underestimate, but don't
overestimate'' property is critical to the application of optimization
techniques, as it implies we can use an unrestrained learning
algorithm to derive a warm start policy.

\section{Empirical Evaluation}
\label{s:experiments}

We evaluated our method on a real-world {\it Internet advertising}
dataset.  We have obtained proprietary data from an online advertising
company, covering a period of approximately one month. The data are
comprised of logs of events $(x,a,y)$, where each event represents a
visit by a user to a particular web page $x$, from a set of web pages
$X$.  From a large set of advertisements $A$, the commercial system
chooses a single ad $a$ for the topmost, or most prominent position.
It also chooses additional ads to display, but these were ignored in
our test.  The output $y$ is an indicator of whether the user clicked
on the ad or not.

The total number of ads in the data set is approximately $880,000$.
The training data consist of $35$ million events.  The test data
contain $19$ million events occurring after the events in the training data.
The total number of distinct web pages is approximately $3.4$ million.

We trained a policy $h$ to choose an ad, based on the current page, to
maximize the probability of click.  For the purposes of learning, each
ad and page was represented internally as a sparse high-dimensional
feature vector.  The features correspond to the words that appear in
the page or ad, weighted by the frequency with which they appear.  Each
ad contains, on average, $30$ ad features and each page, approximately
$50$ page features.  The particular form of $f$ was linear over all features of its input $(x,y)$, which is a sparse high-dimensional feature vector representing the combination of the page and ad.\footnote{Technically the feature vector that the regressor uses is the Cartesian product of the page and ad vectors.}
For instance, every pair of possible words had a corresponding feature.  For example, given the two words ``apple'' and ``ipod'', the corresponding feature ``apple-ipod'' has a value of $0.25$ when the first word, ``apple'', appeared in the page $x$ with frequency 0.5 and the second word, ``ipod'', appeared in the ad $a$ with frequency 0.5.

Using \emph{all} the data, we modeled the logging policy using simple
empirical estimation:
\begin{equation}
\label{e:prob_ml}
\hat{\pi}(a|x) = \frac{|\{ t | (a_t = a) \wedge (x_t = x) \} |}{|\{ t | x_t = x \} |}.
\end{equation}
In words, for each page and ad pair $(x,a)$, we computed the number of times $a$ appeared on page $x$ in the data.
The decision to use all of the data requires careful consideration.
Some alternatives to consider are:
\begin{enumerate}
\item Training data only.  Since the set of ads changes over time,
  many ads appearing in the test data do not occur at all in the
  training data.  Consequently, reliably predicting the performance on
  test data is problematic.
\item Training data for training set and test data for test set.  This
  approach has an inherent bias towards incorrectly high scores on the
  test set.  In an extreme case, suppose that only one ad appears on a
  (rare) webpage in the test set.  Then, any policy selecting from
  amongst the set of appearing ads must select this ad.
\item All data.  This approach means that policies must generally
  select from a larger set of ads than are available at any moment in
  time for the live system, implying that the policy evaluation is
  generally pessimistic.  Note that the logging policy in contrast is
  \emph{optimistically} evaluated, because the set of test-time
  available ads is smaller than the set of ads available over both
  test-time and train-time ads, implying the frequency estimates for
  test-time ads on the train+test dataset are generally smaller than
  an estimate using just test-time ads.\footnote{As an extreme
    example, suppose we log data for two days and we use the first day
    for training and the second day for testing.  Suppose that only a
    single ad $a_1$ appears in the train set, and a single ad $a_2$
    appears in the test set, due to the fact that the budget for ad
    $a_1$ ran out after the first day.  Our empirical estimate of
    $\hat{\pi}(a_2 | x)$ on the test set used in the denominator of
    our estimator (Equation~\ref{e:prob_ml}) will be $1/2$.  In fact
    the true probability of $a_2$ on the test set is $1$.  Thus, the
    value of the logging policy will be over estimated by a factor of
    2.  Suppose further that ad $a_1$ is indeed better than $a_2$.
    The evaluation of a policy that always chooses the better ad,
    $a_1$, using Equation~\ref{e:prob_ml} will be zero, a drastic
    underestimate of its true value.}  These smaller-than-necessary
  frequency estimates imply that the logging policy evaluation is
  optimistic since events are weighted by the inverse frequency.
  Consequently, this choice provides a conservative estimate for new
  policies and an optimistic choice for the older (logging) policy.
\end{enumerate}

The particular policy that was optimized, had an argmax form: $h(x) =
\argmax_{a \in C(X)} \{f(x,a)\}$, with a crucial distinction from
previous approaches in how $f(x,a)$ was trained.  Here $f:X\times A
\to [0,1]$ is a regression function that is trained to estimate
probability of click, and $C(X) = \{a \in A~|~\hat{\pi}(a|x) > 0\}$ is
a set of feasible ads.
% By taking the argmax over the smaller set $S$ rather than the entire set of ads $A$, we limit the scope of the argmax regressor substantially.  If, for example an ad $a$ has never appeared on webpage $x$ in the training set, then we cannot expect the regressor to predict well for that combination.

The training samples were of the form
$(x,a,y)$, where $y = 1$ if the ad $a$ was clicked after being shown
on page $x$ or $y = 0$ if it wasn't clicked.  The regressor $f$ was
chosen to approximately minimize the \emph{weighted} squared loss:
$\frac{(y - f(x,a))^2}{{\max\{\hat\pi(a_t|x_t),\tau\}}}$.  
% !!! maybe talk about how both the weighting and scope limiting of this argmax is motivated by the theory in the previous section. -Alex

Stochastic gradient descent was used to minimize the squared loss on the training
data.

During the evaluation, we computed the estimator on the test data
$(x_t,a_t,y_t)$:
\begin{equation}
\label{e:test_eval}
\hat{V}^h_{\hat\pi} = \frac{1}{T}\sum_{t=1}^{T} \frac{y_t I(h(x_t)=a_t)}{\max\{\hat\pi(a_t|x_t),\tau\}}.
\end{equation}
 As mentioned in the introduction, this estimator is biased due to
the use of the parameter $\tau > 0$.  As shown in the analysis of
Section~\ref{s:analysis}, this bias typically results in an
underestimate of the true value of the policy $h$.  

\chomp{Our goal of determining the advantage of using policy $h$ over that of
the logging policy can be achieved by estimating the value of the
logging policy $\pi$ with the same bias effect:
\begin{equation}
\label{e:base_policy_eval}
\hat{V}^{\hat\pi}_{\hat\pi} = \frac{1}{T}\sum_{t=1}^{T} \frac{y_t \hat{\pi}(a_t|x_t)}{\max\{\hat\pi(a_t|x_t),\tau\}}.
\end{equation}
This argument can be formalized as follows.  The form of our estimator as written in Equation~\ref{e:est} makes sense only for deterministic policies $h$.
It is easily modified to handle the evaluation of stochastic policies by replacing the indicator function $I(h(x) = a)$ with $h(a|x)$, the probability that $h$ will choose $a$.  Equation~\ref{e:base_policy_eval} is
therefore the result of evaluating the stochastic policy $\hat{\pi}$, our best estimation of the logging policy.}

We experimented with different thresholds $\tau$ and parameters of our
learning algorithm.\footnote{For stochastic gradient descent, we
  varied the learning rate over $5$ fixed numbers ($0.2, 0.1, 0.05,
  0.02, 0.01$) using 1 pass over the data.  We report on the test
  results for the value with the best training error.}

\subsection{Results}

\begin{center}
\label{t:table}
\begin{small}
\begin{tabular}{|c||c|c|c|}
\hline
Method & $\tau$ & Estimate & Interval\\
\chomp{\hline
Base & 0.01 & 0.0200 & [0.0194,0.0214] \\}
\hline
Learned & 0.01 & 0.0193 & [0.0187,0.0206] \\
\hline
Random & 0.01 & 0.0154 & [0.0149,0.0166] \\
\chomp{\hline
Base & 0.05 & 0.0177 & [0.0174,0.0183]\\}
\hline
Learned & 0.05 & 0.0132 & [0.0129,0.0137] \\
\hline
Random & 0.05 & 0.0111 & [0.0109,0.0116] \\
\hline
Naive & 0.05 & 0.0 & [0,0.0071] \\
\hline
\end{tabular}
\end{small}
\end{center}
The Interval column is computed using the relative entropy form
of the Chernoff bound with $\delta = 0.05$ which holds under the
assumption that variables, in our case the samples used in the computation of the estimator (Equation~\ref{e:test_eval}), are IID.
Note that this computation is
slightly complicated because the range of the variables is
$[0,1/\tau]$ rather than $[0,1]$ as is typical.  This is handled by
rescaling by $\tau$, applying the bound, and then rescaling the
results by $1/\tau$.

\chomp{The``Base'' policy is the same in the two entries in the table
above and its evaluation is computed according to Equation~\ref{e:base_policy_eval}.  The base policy does not depend on $\tau$, but $\tau$ does
influence the estimate.}

The ``Random'' policy is the policy that chooses randomly from the set of feasible ads:
$\mathrm{Random}(x) = a \sim \mathrm{UNIF}(C(X))$,
where $\mathrm{UNIF}(\cdot)$ denotes the uniform distribution.

The ``Naive'' policy corresponds to the theoretically flawed
supervised learning approach detailed in the introduction.  The
evaluation of this policy is quite expensive, requiring one evaluation
per ad per example, so the size of the test set is reduced to $8373$
examples with a click, which reduces the significance of the results.
We bias the results towards the naive policy by choosing the
chronologically first events in the test set (i.e. the events most
similar to those in the training set).  Nevertheless, the naive policy
receives $0$ reward, which is significantly less than all other
approaches.  A possible fear with the evaluation here is that the
naive policy is always finding good ads that simply weren't explored.
A quick check shows that this is not correct--the naive argmax simply
makes implausible choices.  Note that we report only evaluation
against $\tau = 0.05$, as the evaluation against $\tau = 0.01$ is not
significant, although the reward obviously remains $0$.

The ``Learned'' policies do depend on $\tau$.  As suggested by
Theorem~\ref{t:opt}, as $\tau$ is decreased, the effective set of
hypotheses we compete with is increased, thus allowing for better
performance of the learned policy.  Indeed, the estimates for both the
learned policy and the random policy improve when we decrease $\tau$
from $0.05$ to $0.01$.  

The empirical click-through rate on the test set was $0.0213$, which
is slightly larger than the estimate for the best learned policy.
However, this number is not directly comparable since the estimator
provides a lower bound on the true value of the policy due to the bias
introduced by a nonzero $\tau$ and because any deployed policy chooses
from only the set of ads which are available to display rather than
the set of all ads which might have been displayable at other points
in time.

The empirical results are generally consistent with the theoretical
approach outlined here---they provide a consistently pessimal estimate
of policy value which nevertheless has sufficient dynamic range to
distinguish learned policies from random policies, learned policies
over larger spaces (smaller $\tau$) from smaller spaces (larger
$\tau$), and the theoretically unsound naive approach from
sounder approaches which choose amongst the the explored space of ads.

\section{Conclusion}
\label{s:discussion}

We stated, justified, and evaluated theoretically and empirically the
first method for solving the warm start problem for exploration from
logged data with controlled bias and estimation.  This problem is of
obvious interest to applications for internet companies that recommend
content (such as ads, search results, news stories, etc...) to users.

However, we believe this also may be of interest for other application
domains within machine learning.  For example, in reinforcement
learning, the standard approach to offline policy evaluation is based
on importance weighted samples~\cite{Ng00,D00}.  The basic results
stated here could be applied to RL settings, eliminating the need to
know the probability of a chosen action explicitly, allowing an RL
agent to learn from external observations of other agents.

The main restrictive assumption adopted by the Exploration Scavenging
paper~\cite{langford08} is that the logging policy chooses actions
independently of the input.  We have introduced a new method that
works when this assumption is violated.  On the other hand, we have
required the logging policy be a sequence of fixed, possibly
deterministic, policies, whereas the Exploration Scavenging paper
allowed for the use of logging policies that learn and adapt over
time.  An interesting situation occurs when you allow $\pi_t$ to
depend on the history up to time $t$.  In this setting the policy may
both adapt (like in the Exploration Scavenging paper) and choose
actions dependent on the current input.  Is there an offline policy
estimator which can work in this setting?  The most generic answer is
no, but there may exist some natural constraint which encapsulates the
approach discussed here, as well as in the earlier paper.

{\small
\bibliography{alstrehl}
\bibliographystyle{icml2010}
}

\end{document}